\newtheorem{theorem}{Theorem}
\newtheorem{corollary}{Corollary}
\newtheorem{remark}{Remark}
\begin{document}

\title{Semantic Communication with Distribution Learning through Sequential Observations}

\author{\IEEEauthorblockN{Samer Lahoud, \IEEEmembership{Senior Member, IEEE}}
\IEEEauthorblockA{\textit{Faculty of Computer Science} \\
\textit{Dalhousie University}, Halifax, Canada \\
sml@dal.ca}
\and
\IEEEauthorblockN{Kinda Khawam}
\IEEEauthorblockA{\textit{Laboratoire DAVID} \\
\textit{Université de Versailles}, Versailles, France \\
kinda.khawam@uvsq.fr}
}

\maketitle

\begin{abstract}
Semantic communication aims to convey meaning rather than bit-perfect reproduction, representing a paradigm shift from traditional communication. This paper investigates distribution learning in semantic communication where receivers must infer the underlying meaning distribution through sequential observations. While semantic communication traditionally optimizes individual meaning transmission, we establish fundamental conditions for learning source statistics when priors are unknown. We prove that learnability requires full rank of the effective transmission matrix, characterize the convergence rate of distribution estimation, and quantify how estimation errors translate to semantic distortion. Our analysis reveals a fundamental trade-off: encoding schemes optimized for immediate semantic performance often sacrifice long-term learnability. Experiments on CIFAR-10 validate our theoretical framework, demonstrating that system conditioning critically impacts both learning rate and achievable performance. These results provide the first rigorous characterization of statistical learning in semantic communication and offer design principles for systems that balance immediate performance with adaptation capability.
\end{abstract}

\section{Introduction}
Semantic communication represents a paradigm shift from traditional communication systems by focusing on conveying meaning rather than bit-perfect reproduction. The recent formalization \cite{Shao24} identifies two fundamental problems in semantic communication: \emph{language design} and \emph{language exploitation}. Language design involves crafting optimal mappings between meanings, messages, and channel symbols to minimize distortion under communication constraints. This problem aligns with classical joint source-channel coding theory, where transmitter and receiver can negotiate codebooks before transmission. Modern approaches leverage deep learning to design these languages for complex sources like images, text, and video, achieving remarkable performance in resource-constrained scenarios.
Language exploitation, by contrast, addresses a more nuanced challenge: how to communicate effectively when certain elements of the semantic language are fixed and cannot be redesigned. This scenario captures real-world situations where communication parties must work within existing frameworks, whether human languages, established protocols, or pre-trained models. The language exploitation problem extends beyond traditional information theory by acknowledging mismatches between transmitter and receiver, such as differing prior beliefs about meaning distributions or asymmetric encoding/decoding capabilities.

This paper investigates a fundamental instance of language exploitation: how can a receiver learn the underlying meaning distribution when the semantic language is fixed but the source prior is unknown? This question is particularly relevant in modern applications where semantic decoders (like large language models) are computationally expensive to modify, but adaptation to specific sources or users is essential for effective communication. Consider a voice assistant that must adapt to a user's speaking patterns without modifying its core language model, or a semantic communication system that must serve diverse users with different meaning distributions using a fixed encoding scheme.

We establish the first rigorous theoretical framework for distribution learning in semantic communication systems. We derive necessary and sufficient conditions for learnability, prove that estimation errors decay at a rate inversely proportional to the square root of observations, and show how these errors translate to communication performance degradation. Our analysis reveals a fundamental trade-off: encoders optimized for immediate semantic performance often sacrifice long-term learnability by creating near-singular transmission matrices. Experiments on CIFAR-10 validate these predictions, demonstrating that system conditioning can cause over tenfold differences in sample complexity. The latter is a critical consideration for practical deployment in resource-constrained environments.

\section{Related Work}
Recent advances in deep learning have renewed interest in semantic communication from a practical perspective. The DeepJSCC framework \cite{Bourtsoulatze2019} pioneered end-to-end learning of semantic encoders and decoders for image transmission, jointly optimizing source and channel coding through neural networks. Extensions to text \cite{Xie2021} and video \cite{Tung2022} consistently demonstrate superior performance over separation-based designs in finite blocklength regimes. These works primarily address the language design problem, where both encoder and decoder can be jointly optimized for specific source distributions.

The language exploitation perspective, where certain system components remain fixed, has received comparatively less attention. Güler {\it et al.} \cite{Guler2018} examined semantic communication games with misaligned objectives between transmitter and receiver, showing improved efficiency even with fixed interpretation rules. The prompt engineering literature \cite{Liu2023} represents a practical instance of language exploitation, where users adapt their encoding to fixed pre-trained models. However, these works lack rigorous theoretical characterization of adaptation and learning dynamics.

Distribution learning in communication has been studied primarily in classical contexts. Universal source coding \cite{Csiszar1984} addresses compression without prior knowledge but focuses on asymptotic rather than finite-sample performance. Feedback-based schemes \cite{Shao2023Attention} enable adaptation but require bidirectional channels. The multi-armed bandit literature \cite{Lattimore2020} provides finite-sample bounds but does not address semantic channels where meanings traverse fixed encoding schemes.

For semantic sources specifically, Liu et al. \cite{Liu2022} characterized rate-distortion trade-offs with intrinsic states and extrinsic observations, though assuming known distributions. The mismatch capacity literature \cite{Scarlett2017} studies incorrect decoder assumptions but focuses on channel rather than source uncertainty. Goal-oriented communication \cite{Popovski2022} emphasizes task-specific metrics but typically assumes known source statistics.

The critical gap is the absence of finite-sample analysis for distribution learning through semantic channels. While universal source coding provides asymptotic guarantees and deep learning approaches demonstrate empirical success, neither characterizes how quickly receivers learn source distributions when constrained by fixed semantic languages. Our work fills this gap by providing the first non-asymptotic bounds for semantic distribution learning: establishing learnability conditions, convergence rates, and distortion characterization. This theoretical foundation is essential for adaptive semantic communication systems serving diverse users without modifying core infrastructure.

\section{System Model}

We consider a semantic communication system where a transmitter conveys a sequence of meanings to a receiver who must learn the underlying meaning distribution through observations.

\subsection{Sequential Transmission Model}

Let $\mathcal{W} = \{w_1, w_2, \ldots, w_N\}$ denote the set of $N$ possible meanings with unknown prior distribution $p(w)$. The transmitter observes an i.i.d. sequence $w^{(1)}, w^{(2)}, \ldots, w^{(t)}, \ldots$ drawn from $p(w)$ and encodes each meaning into a message from the set $\mathcal{S} = \{s_1, s_2, \ldots, s_M\}$.

The encoding process is characterized by a stochastic matrix $U \in \mathbb{R}^{M \times N}$, where $u(s_m|w_n)$ represents the probability of encoding meaning $w_n$ as message $s_m$. Each column of $U$ forms a probability distribution: $\sum_{m=1}^M u(s_m|w_n) = 1$.

Messages traverse a memoryless channel described by the transition matrix $C \in \mathbb{R}^{M \times M}$, where $c(\hat{s}_i|s_j)$ denotes the probability of receiving $\hat{s}_i$ given transmission of $s_j$. The channel is column-stochastic: $\sum_{i=1}^M c(\hat{s}_i|s_j) = 1$ for all $j$.

\subsection{Distribution Learning Problem}

The receiver observes the sequence $\hat{s}^{(1)}, \hat{s}^{(2)}, \ldots, \hat{s}^{(T)}$ and aims to estimate the prior $p(w)$. The key insight is that the observable distribution $p(\hat{s})$ relates to the meaning distribution through the effective transmission matrix:
\begin{equation}
p(\hat{s}) = CUp(w) \triangleq Ap(w)
\end{equation}
where $A = CU \in \mathbb{R}^{M \times N}$ captures the combined effect of semantic encoding and channel transmission.

After $T$ observations, the receiver computes the empirical distribution
\begin{equation}
\hat{p}_T(\hat{s}_m) = \frac{1}{T} \sum_{t=1}^T \mathbb{1}[\hat{s}^{(t)} = \hat{s}_m]
\end{equation}
and attempts to recover $p(w)$ by solving the linear system $A\hat{p}_T(w) = \hat{p}_T(\hat{s})$. When $A$ has full column rank, the unique solution is $\hat{p}_T(w) = A^{\dagger}\hat{p}_T(\hat{s})$, where $A^{\dagger} = (A^TA)^{-1}A^T$ denotes the Moore-Penrose pseudoinverse.

\subsection{Performance Metrics}

We evaluate the learning performance through two complementary metrics that capture different aspects of the receiver's adaptation process.

\paragraph{Estimation Error} The accuracy of distribution learning is measured by the $\ell_2$ norm $\|\hat{p}_T(w) - p(w)\|_2$. This metric directly quantifies how well the receiver has learned the source statistics after $T$ observations and determines the convergence rate of our learning algorithm.

\paragraph{Semantic Distortion} To assess the impact of distribution learning on communication performance, we adopt the semantic distortion framework from \cite{Shao24}. Given a distortion measure $d(w,\hat{w})$ between transmitted meaning $w$ and reconstructed meaning $\hat{w}$, the receiver implements a decoder that minimizes expected distortion based on its current belief about the prior. When the receiver has perfect knowledge of the true prior $p(w)$, the optimal decoder is:
\begin{equation}
\hat{w}_p^*(\hat{s}) = \arg\min_{\hat{w}} \sum_{w,s} p(w)p(s|w)c(\hat{s}|s)d(w,\hat{w})
\end{equation}
yielding the minimum achievable semantic distortion $D_{P,V_p^*}$. However, with only the estimated distribution $\hat{p}_T(w)$ available, the receiver implements:
\begin{equation}
\hat{w}_{\hat{p}_T}^*(\hat{s}) = \arg\min_{\hat{w}} \sum_{w,s} \hat{p}_T(w)p(s|w)c(\hat{s}|s)d(w,\hat{w})
\end{equation}
resulting in semantic distortion $D_{P,V_{\hat{p}_T}^*}$. The performance loss due to imperfect distribution knowledge is characterized by the semantic distortion gap:
\begin{equation}
\Delta D_T = D_{P,V_{\hat{p}_T}^*} - D_{P,V_p^*} \geq 0
\end{equation}

This gap quantifies the price of learning: how much additional distortion the system incurs because the receiver must estimate the prior from observations rather than knowing it perfectly. A key contribution of our work is establishing how $\Delta D_T$ relates to the estimation error $\|\hat{p}_T(w) - p(w)\|_2$ and decays with the number of observations $T$.

\subsection{Key Assumptions}

Our analysis relies on three fundamental assumptions:
\begin{enumerate}
    \item The receiver possesses knowledge of the encoding scheme $U$ and channel $C$, but not the prior $p(w)$. This reflects practical scenarios where system parameters are standardized but source statistics vary across users or applications.
    \item The channel operates in a memoryless fashion, ensuring successive transmissions are statistically independent. This assumption is standard in information theory and holds for many practical channels with appropriate interleaving.
    \item The effective transmission matrix $A = CU$ has full column rank ($\text{rank}(A) = N$), ensuring unique recovery of $p(w)$ from $p(\hat{s})$. This condition, which we later show to be necessary for learnability, requires the encoding and channel to preserve sufficient information about the meaning distribution.
\end{enumerate}

This model captures the fundamental challenge in semantic communication: how quickly can the receiver learn the meaning distribution, and how does estimation error impact communication performance?

\section{Distribution Learning Theory for Semantic Communication}

Having established our system model, we now present our main theoretical contributions. We begin by characterizing when distribution learning is possible, establish finite-sample convergence rates, and quantify the impact on semantic communication performance.

\subsection{Fundamental Learnability}

Our first result establishes the fundamental condition that determines whether a receiver can learn the source distribution through observations.

\begin{theorem}[Necessary and Sufficient Condition for Learnability]
\label{thm:learnability}
A semantic communication system with encoding $U$ and channel $C$ is learnable if and only if $\text{rank}(CU) = N$.
\end{theorem}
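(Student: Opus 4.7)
The plan is to prove the two directions of the biconditional separately, both hinging on the linear relation $p(\hat{s}) = A p(w)$ with $A = CU$ established in the system model. Learnability will mean the existence of an estimator $\hat{p}_T(w)$ built from the observations $\hat{s}^{(1)}, \ldots, \hat{s}^{(T)}$ that converges to the true prior $p(w)$ regardless of which $p(w)$ generated the data.

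For the sufficiency direction, I would show that full column rank of $A$ implies learnability via the pseudoinverse estimator already defined in the system model. When $\text{rank}(A) = N$, the Gram matrix $A^T A$ is invertible and $A^{\dagger} A = I_N$. By the strong law of large numbers applied to the i.i.d.\ indicators $\mathbb{1}[\hat{s}^{(t)} = \hat{s}_m]$, the empirical vector $\hat{p}_T(\hat{s})$ converges to $p(\hat{s})$ almost surely. Since $A^{\dagger}$ is a bounded linear map, $\hat{p}_T(w) = A^{\dagger} \hat{p}_T(\hat{s}) \to A^{\dagger} A\, p(w) = p(w)$ almost surely, so the estimator recovers the prior.

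For necessity I would argue by contrapositive: if $\text{rank}(A) < N$, exhibit two distinct valid priors that induce identical observable distributions and are therefore statistically indistinguishable from any sample. The null space $\ker(A)$ is non-trivial, so pick $v \neq 0$ with $Av = 0$. The delicate point is that $v$ must produce a genuine probability vector when added to some prior. Here I would use column-stochasticity: because both $U$ and $C$ are column-stochastic, so is $A$, giving $\mathbf{1}_M^T A = \mathbf{1}_N^T$ and hence $\mathbf{1}_N^T v = \mathbf{1}_M^T (A v) = 0$. Every null vector therefore has zero coordinate sum, so taking a strictly interior prior $p$ (for example the uniform distribution) and setting $p' = p + \epsilon v$ for sufficiently small $\epsilon > 0$ yields a second valid probability vector distinct from $p$ with $A p' = A p$. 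Under either prior the law of $\hat{s}^{(t)}$ is identical, so no measurable function of the observations can separate them, contradicting learnability.

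The main obstacle I anticipate is precisely this construction of the admissible perturbation: without noting column-stochasticity of $A$, one could worry that $p + \epsilon v$ fails the sum-to-one constraint and the contrapositive collapses. Once the identity $\mathbf{1}_N^T v = 0$ is in hand, everything else reduces to elementary linear algebra and the law of large numbers, with no need for heavier machinery such as identifiability results from statistics or information-theoretic converses.
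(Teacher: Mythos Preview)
Your proposal is correct and follows essentially the same approach as the paper: sufficiency via the pseudoinverse when $A^TA$ is invertible, and necessity by taking $v\in\ker(A)\setminus\{0\}$, using column-stochasticity of $A$ to conclude $\mathbf{1}_N^T v=0$, and perturbing a prior by $\epsilon v$. Your choice of a strictly interior prior (the uniform distribution) is a mild simplification of the paper's construction, which instead picks an arbitrary $p_1$ and computes an explicit threshold $\gamma=\min_{i:v_i<0} p_1(w_i)/|v_i|$ to guarantee non-negativity.
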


\begin{proof}
See Appendix~\ref{app:learnability-proof}.
\end{proof}

This theorem reveals a critical insight: learnability depends solely on the combined effect of semantic encoding and channel transmission, captured by the effective transmission matrix $A = CU$. When the encoder merges distinct meanings or the channel introduces ambiguity that reduces rank, the system becomes fundamentally unlearnable. This provides the first precise characterization of when adaptation through observation is possible in semantic communication.

The practical implications are immediate. System designers must ensure that their encoding schemes, even when optimized for immediate semantic performance, maintain full rank when combined with the channel. This often conflicts with traditional semantic encoding goals, where similar meanings are mapped to nearby codewords to minimize distortion under channel errors \cite{Bourtsoulatze2019}.

\subsection{Convergence Analysis}

Given that learning is possible, we next characterize how quickly the receiver can learn the source distribution.

\begin{theorem}[Convergence Rate of Distribution Learning]
\label{thm:convergence}
For a learnable system, let $\hat{p}_T(w)$ be the distribution estimated from $T$ independent observations using the pseudoinverse reconstruction $\hat{p}_T(w) = A^{\dagger}\hat{p}_T(\hat{s})$. Then:
\begin{equation}
\mathbb{E}\|\hat{p}_T(w) - p(w)\|_2 \leq \frac{\sqrt{M}}{2\sigma_{\min}(A)\sqrt{T}}
\end{equation}
where $\sigma_{\min}(A)$ is the smallest singular value of $A = CU$.
\label{eq:conv-rate}
\end{theorem}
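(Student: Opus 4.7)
The plan is to reduce the problem to bounding the empirical error in the observation space and then push it through the linear pseudoinverse map. First I would establish the algebraic identity that drives the bound. Because $A$ has full column rank by the learnability assumption (Theorem~\ref{thm:learnability}), the Moore--Penrose pseudoinverse satisfies $A^{\dagger}A = I_N$, so applying $A^{\dagger}$ to both sides of $p(\hat{s}) = Ap(w)$ gives $p(w) = A^{\dagger}p(\hat{s})$. Subtracting from $\hat{p}_T(w) = A^{\dagger}\hat{p}_T(\hat{s})$ yields the clean relation
\begin{equation}
\hat{p}_T(w) - p(w) = A^{\dagger}\bigl(\hat{p}_T(\hat{s}) - p(\hat{s})\bigr).
\end{equation}

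Next I would translate this into a norm bound using the fact that the spectral norm of $A^{\dagger}$ equals $1/\sigma_{\min}(A)$, giving
\begin{equation}
\|\hat{p}_T(w) - p(w)\|_2 \;\leq\; \frac{1}{\sigma_{\min}(A)}\,\|\hat{p}_T(\hat{s}) - p(\hat{s})\|_2.
\end{equation}
Taking expectations and applying Jensen's inequality to move the expectation under the square root, $\mathbb{E}\|X\|_2 \leq \sqrt{\mathbb{E}\|X\|_2^2}$, reduces the task to controlling the mean-squared error of the empirical distribution on the observation alphabet.

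Then I would exploit the multinomial structure of the observations. Since $\hat{s}^{(1)},\ldots,\hat{s}^{(T)}$ are i.i.d. with marginal $p(\hat{s})$, the coordinate $\hat{p}_T(\hat{s}_m)$ is a normalized binomial with variance $p(\hat{s}_m)(1-p(\hat{s}_m))/T$. Summing over $m = 1,\ldots,M$ and using the elementary bound $x(1-x) \leq 1/4$ for $x \in [0,1]$ gives
\begin{equation}
\mathbb{E}\|\hat{p}_T(\hat{s}) - p(\hat{s})\|_2^2 \;=\; \sum_{m=1}^{M} \frac{p(\hat{s}_m)(1-p(\hat{s}_m))}{T} \;\leq\; \frac{M}{4T}.
\end{equation}
Chaining these three inequalities produces exactly the claimed bound $\sqrt{M}/(2\sigma_{\min}(A)\sqrt{T})$.

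Each step here is standard, so I do not expect a genuine obstacle; the mild subtlety is that the bound is stated for $\mathbb{E}\|\cdot\|_2$ (not $\sqrt{\mathbb{E}\|\cdot\|_2^2}$), which is why Jensen's inequality is invoked in that direction. One might worry whether the bound $x(1-x)\leq 1/4$ loses too much, but it is tight in the worst case and keeps the dependence on $M$ and $T$ exactly as stated, so no sharper argument is needed. The result is dimension-free in $N$ beyond what is implicitly captured by $\sigma_{\min}(A)$, which cleanly separates the statistical content (the $\sqrt{M/T}$ factor) from the system-design content (the conditioning $1/\sigma_{\min}(A)$).
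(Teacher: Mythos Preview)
Your proposal is correct and follows essentially the same argument as the paper: establish the linear error identity via $A^{\dagger}$, bound the mean-squared empirical error of $\hat{p}_T(\hat{s})$ by $M/(4T)$ using the binomial variance and $x(1-x)\leq 1/4$, and combine via $\|A^{\dagger}\|_2 = 1/\sigma_{\min}(A)$ with Jensen's inequality. The only cosmetic difference is that you justify $p(w)=A^{\dagger}p(\hat{s})$ explicitly through $A^{\dagger}A=I_N$, which the paper leaves implicit.
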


\begin{proof}
Each component $\hat{p}_T(\hat{s}_m) = N_m/T$ where $N_m \sim \text{Binomial}(T, p(\hat{s}_m))$:
\begin{equation}
\text{Var}(\hat{p}_T(\hat{s}_m)) = \frac{p(\hat{s}_m)(1-p(\hat{s}_m))}{T} \leq \frac{1}{4T}
\end{equation}

Since observations are independent:
\begin{equation*}
\mathbb{E}\|\hat{p}_T(\hat{s}) - p(\hat{s})\|_2^2 = \sum_{m=1}^M \text{Var}(\hat{p}_T(\hat{s}_m)) \leq \frac{M}{4T}
\end{equation*}

From $A\hat{p}_T(w) = \hat{p}_T(\hat{s})$ and $Ap(w) = p(\hat{s})$:
\begin{equation*}
\hat{p}_T(w) - p(w) = A^{\dagger}(\hat{p}_T(\hat{s}) - p(\hat{s}))
\end{equation*}
where $A^{\dagger} = (A^TA)^{-1}A^T$ is the pseudoinverse.

By Jensen's inequality for the concave square-root function and the submultiplicative property of matrix norms:
\begin{align*}
\mathbb{E}\|\hat{p}_T(w) - p(w)\|_2 &\leq \|A^{\dagger}\|_2 \cdot \mathbb{E}\|\hat{p}_T(\hat{s}) - p(\hat{s})\|_2 \\
&\leq \|A^{\dagger}\|_2 \cdot \sqrt{\mathbb{E}\|\hat{p}_T(\hat{s}) - p(\hat{s})\|_2^2} \\
&= \frac{1}{\sigma_{\min}(A)} \cdot \sqrt{\frac{M}{4T}} = \frac{\sqrt{M}}{2\sigma_{\min}(A)\sqrt{T}}
\end{align*}
\end{proof}
This bound provides several key insights:
\begin{itemize}
\item The learning rate follows the classical $O(1/\sqrt{T})$ pattern, confirming that semantic channels do not fundamentally alter statistical learning rates.
\item The constant depends critically on $\sigma_{\min}(A)$, which measures how well-conditioned the learning problem is. Smaller singular values indicate that some meaning distinctions are nearly lost through the encoding-channel cascade.
\item The bound scales with $\sqrt{M}$, the number of possible received symbols, reflecting the curse of dimensionality in the observation space.
\end{itemize}

\begin{remark}
The bound can be expressed in terms of the condition number $\kappa(A) = \sigma_{\max}(A)/\sigma_{\min}(A)$, highlighting that well-conditioned systems (small $\kappa$) learn faster. For semantic communication, this suggests a design principle: encodings should not only preserve semantic similarity but also maintain good numerical conditioning.
\end{remark}

\subsection{Performance Impact}

Learning the distribution is only valuable if it improves communication performance. We now quantify how estimation errors translate to semantic distortion.

\begin{theorem}[Distortion Gap from Distribution Estimation]
\label{thm:distortion}
Let $D_{P,V_p^*}$ and $D_{P,V_{\hat{p}_T}^*}$ denote the semantic distortions achieved by optimal decoders under true and estimated distributions, respectively. For any distortion measure bounded by $d_{\max}$:
\begin{equation}
\mathbb{E}[D_{P,V_{\hat{p}_T}^*} - D_{P,V_p^*}] \leq \frac{d_{\max}\sqrt{NM}}{\sigma_{\min}(A)\sqrt{T}}
\end{equation}
\end{theorem}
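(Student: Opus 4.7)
\medskip

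The plan is to reduce the distortion gap to the $\ell_1$ (and hence $\ell_2$) distance between the true prior and its estimate, and then invoke Theorem~\ref{thm:convergence}. Define the cost functional
\begin{equation*}
F_q(\hat{w},\hat{s}) \triangleq \sum_{w,s} q(w)\,p(s|w)\,c(\hat{s}|s)\,d(w,\hat{w}),
\end{equation*}
so that $D_{P,V_q^*} = \sum_{\hat{s}} F_p(V_q^*(\hat{s}),\hat{s})$, where $V_q^*(\hat{s})$ minimizes $F_q(\cdot,\hat{s})$. The key observation is the classical ``mismatched Bayes'' decomposition: add and subtract $F_{\hat{p}_T}$ at the two decoder choices to obtain, for every $\hat{s}$,
\begin{equation*}
F_p(V_{\hat{p}_T}^*,\hat{s}) - F_p(V_p^*,\hat{s}) = \underbrace{\bigl[F_p - F_{\hat{p}_T}\bigr](V_{\hat{p}_T}^*,\hat{s})}_{\text{(I)}} + \underbrace{\bigl[F_{\hat{p}_T}(V_{\hat{p}_T}^*,\hat{s}) - F_{\hat{p}_T}(V_p^*,\hat{s})\bigr]}_{\leq 0 \text{ by optimality of }V_{\hat{p}_T}^*} + \underbrace{\bigl[F_{\hat{p}_T} - F_p\bigr](V_p^*,\hat{s})}_{\text{(II)}},
\end{equation*}
so the middle term is discarded and only (I) and (II) contribute.

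Next I would bound each of (I) and (II) pointwise in $\hat{w}$. Since $d \leq d_{\max}$,
\begin{equation*}
\bigl|F_p(\hat{w},\hat{s}) - F_{\hat{p}_T}(\hat{w},\hat{s})\bigr| \leq d_{\max}\sum_{w,s} |p(w)-\hat{p}_T(w)|\,p(s|w)\,c(\hat{s}|s).
\end{equation*}
Summing over $\hat{s}$ and using $\sum_{s,\hat{s}} p(s|w)c(\hat{s}|s) = 1$ collapses the inner factor to $1$, yielding $\sum_{\hat{s}}|\cdot| \leq d_{\max}\|p - \hat{p}_T\|_1$ for each of (I) and (II). Hence
\begin{equation*}
D_{P,V_{\hat{p}_T}^*} - D_{P,V_p^*} \leq 2\,d_{\max}\,\|\hat{p}_T(w) - p(w)\|_1.
\end{equation*}

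Finally, I would apply the standard norm comparison $\|x\|_1 \leq \sqrt{N}\,\|x\|_2$ on $\mathbb{R}^N$ and take expectations, inserting the convergence rate from Theorem~\ref{thm:convergence}:
\begin{equation*}
\mathbb{E}\bigl[D_{P,V_{\hat{p}_T}^*} - D_{P,V_p^*}\bigr] \leq 2 d_{\max}\sqrt{N}\,\mathbb{E}\|\hat{p}_T(w)-p(w)\|_2 \leq 2 d_{\max}\sqrt{N}\cdot \frac{\sqrt{M}}{2\sigma_{\min}(A)\sqrt{T}} = \frac{d_{\max}\sqrt{NM}}{\sigma_{\min}(A)\sqrt{T}}.
\end{equation*}

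The only subtle step is the optimality cancellation in the decomposition, which is what makes the bound linear rather than quadratic in the estimation error; the chief care points are checking that $\hat{p}_T$ is treated as a measure on $\mathcal{W}$ (so that $F_{\hat{p}_T}$ and $V_{\hat{p}_T}^*$ are well-defined even when $\hat{p}_T$ is not a proper probability vector after pseudoinverse reconstruction) and verifying the marginalization $\sum_{s,\hat{s}} p(s|w)c(\hat{s}|s)=1$ that produces the clean $\ell_1$ bound. Everything else is routine and the $\sqrt{NM}$ factor arises exactly from combining $\sqrt{N}$ (norm conversion) with the $\sqrt{M}$ in Theorem~\ref{thm:convergence}.
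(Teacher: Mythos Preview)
Your proposal is correct and follows essentially the same route as the paper's own proof: define the cost functional, use the add-and-subtract ``mismatched Bayes'' decomposition (which is exactly what underlies the paper's one-line inequality $\psi_p(\hat{w}_{\hat{p}_T}^*,\hat{s}) - \psi_p(\hat{w}_p^*,\hat{s}) \le 2\max_{\hat{w}}|\psi_p-\psi_{\hat{p}_T}|$), bound the difference by $d_{\max}$ times an $\ell_1$ kernel, collapse via $\sum_{s,\hat{s}}p(s|w)c(\hat{s}|s)=1$, convert $\ell_1\to\ell_2$ with the $\sqrt{N}$ factor, and invoke Theorem~\ref{thm:convergence}. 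Your explicit (I)/(middle)/(II) decomposition is in fact slightly cleaner than the paper's presentation, and your remark that $F_{\hat{p}_T}$ remains well-defined even when $\hat{p}_T$ is a signed vector is a valid care point the paper glosses over.
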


\begin{proof}
See Appendix~\ref{app:distortion-proof}.
\end{proof}

This result completes our theoretical framework by connecting distribution learning to communication performance. The distortion gap decays at the same $O(1/\sqrt{T})$ rate as the estimation error, but with an additional factor of $\sqrt{N}$ reflecting the projection from distribution space to distortion space. Notably, the gap depends on both the number of meanings $N$ and received symbols $M$, suggesting that simpler semantic spaces facilitate faster performance convergence.

\subsection{Special Cases and Design Guidelines}

To provide concrete guidance for system design, we examine important special cases.

\begin{corollary}[Deterministic Encoding]
\label{cor:deterministic}
For deterministic encoding $U = \Delta_{i_1,\ldots,i_N}$ mapping each meaning $w_n$ to a unique message $s_{i_n}$:
\begin{enumerate}
\item If the mapping is injective (one-to-one): the system is learnable if and only if the columns $\{i_1,\ldots,i_N\}$ of the channel matrix $C$ are linearly independent.
\item If the mapping is non-injective (many-to-one): the system is not learnable regardless of the channel.
\end{enumerate}
\end{corollary}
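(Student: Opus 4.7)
The plan is to reduce both parts of the corollary to Theorem~\ref{thm:learnability} by exploiting the very simple structure of a deterministic encoding matrix. Concretely, for the mapping $w_n \mapsto s_{i_n}$, the matrix $U = \Delta_{i_1,\ldots,i_N}$ has its $n$-th column equal to the standard basis vector $e_{i_n} \in \mathbb{R}^M$. Consequently $CU = [C e_{i_1}\mid \cdots \mid C e_{i_N}] = [C_{:,i_1}\mid \cdots \mid C_{:,i_N}]$, i.e.\ the columns of $A = CU$ are precisely the columns of $C$ indexed by $i_1,\ldots,i_N$ (with repetition in the non-injective case). Once this identification is in place, everything else is a statement about linear independence of a selection of columns of $C$.

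For part~(1), I would observe that injectivity of the mapping is equivalent to the indices $i_1,\ldots,i_N$ being pairwise distinct, so that $CU$ is the $M\times N$ submatrix of $C$ formed by columns $i_1,\ldots,i_N$. Theorem~\ref{thm:learnability} states learnability iff $\mathrm{rank}(CU)=N$, which for this submatrix is equivalent to its $N$ columns being linearly independent. This is exactly the stated condition.

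For part~(2), if the mapping is non-injective there exist $n\neq n'$ with $i_n = i_{n'}$, and then columns $n$ and $n'$ of $CU$ are both equal to $C_{:,i_n}$. Thus $CU$ has two identical columns and $\mathrm{rank}(CU)\le N-1 < N$, so by Theorem~\ref{thm:learnability} the system fails to be learnable, independently of any property of $C$. Equivalently, the difference $e_n - e_{n'}$ lies in the null space of $U$ and hence of $CU$, so distinct priors $p(w)$ and $p(w)+\alpha(e_n-e_{n'})$ induce the same observable $p(\hat s)$ and cannot be distinguished from any number of observations.

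There is no substantive obstacle here; the corollary is essentially a bookkeeping consequence of Theorem~\ref{thm:learnability} once one writes $U$ in the basis-vector form above. The only care needed is to state clearly what ``deterministic encoding'' means as a column-stochastic matrix, so that the identification $CU = [C_{:,i_1}\mid\cdots\mid C_{:,i_N}]$ is unambiguous, and to note in part~(2) that the conclusion holds \emph{for every} channel $C$ because the rank deficiency is already created by $U$ alone.
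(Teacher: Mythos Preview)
Your proposal is correct and follows essentially the same approach as the paper: identify the $n$-th column of $U$ with the standard basis vector $e_{i_n}$, conclude that $CU$ consists of the columns $C_{:,i_1},\ldots,C_{:,i_N}$ of $C$ (with possible repetition), and then apply Theorem~\ref{thm:learnability} to each case. The additional null-space remark you give for part~(2) is a helpful extra perspective, but the core argument is the same.
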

\begin{proof}
For deterministic encoding $U = \Delta_{i_1,\ldots,i_N}$, the matrix $U$ has exactly one non-zero entry (equal to 1) in each column.

\paragraph{Case 1 (Injective):} If the mapping is injective, then $i_n \neq i_{n'}$ for all $n \neq n'$. The matrix $U$ selects $N$ distinct columns from the $M \times M$ identity matrix, so $\text{rank}(U) = N$. The product $CU$ extracts columns $\{i_1, \ldots, i_N\}$ from $C$. Therefore:
\begin{equation}
\text{rank}(CU) = \text{rank}(\text{columns } \{i_1, \ldots, i_N\} \text{ of } C)
\end{equation}
By Theorem~\ref{thm:learnability}, the system is learnable iff $\text{rank}(CU) = N$, which occurs iff the selected columns are linearly independent.

\paragraph{Case 2 (Non-injective):} If the mapping is non-injective, then $\exists n \neq n'$ such that $i_n = i_{n'}$. This means columns $n$ and $n'$ of $U$ are identical, making columns $n$ and $n'$ of $CU$ identical regardless of $C$. Hence $\text{rank}(CU) < N$, and by Theorem~\ref{thm:learnability}, the system is not learnable.
\end{proof}

This corollary reveals why traditional compression-based approaches to semantic communication may impede learning. When an encoder merges similar meanings to reduce rate (creating a non-injective mapping), it permanently destroys the receiver's ability to learn the true meaning distribution. This fundamental trade-off between compression efficiency and learnability has not been previously recognized in the semantic communication literature.

\section{Numerical Results}



We validate our theoretical framework using CIFAR-10, a dataset of 50,000 32×32 color images. To create a semantic communication system, we first extract high-level features from images using MobileNetV2, a pre-trained convolutional neural network. These feature vectors capture semantic content while removing pixel-level details irrelevant to object recognition. We then apply K-means clustering to these features to define $N = 30$ semantic meanings. Each cluster represents a distinct meaning $w_n$, and the true distribution $p(w)$ naturally emerges from the relative frequencies of images assigned to each cluster in the dataset. This approach creates a realistic non-uniform prior.

To investigate how system conditioning affects learning, we construct three encoding schemes:

\begin{itemize}
\item \emph{Well-conditioned} ($\kappa = 1.4$): Near-identity encoding with small random perturbations, $U = I + 0.02 \cdot \epsilon$ where $\epsilon \sim \mathcal{N}(0,1)$. This represents an ideal system where each meaning maps almost deterministically to a unique message with minimal interference.

\item \emph{Moderate} ($\kappa = 5.9$): Encoding with controlled correlations between adjacent meanings, $U_{i,i+1} = U_{i+1,i} = 0.3$. This models practical systems where semantically related meanings share some encoding similarity.

\item \emph{Ill-conditioned} ($\kappa = 10.2$): Highly correlated encoding $U = 0.6I + 0.4\mathbf{1}\mathbf{1}^T$, where all meanings have substantial overlap in their message representations. This simulates aggressive compression schemes that merge similar meanings to reduce communication cost.
\end{itemize}

We assume perfect channels to isolate the effects of encoding design from channel noise. All systems maintain $\text{rank}(CU) = N = 30$, satisfying our learnability condition from Theorem~\ref{thm:learnability}.

In each experiment, the receiver observes a sequence of transmitted messages corresponding to meanings drawn from the true distribution $p(w)$. The receiver then estimates $\hat{p}_T(w)$ using the pseudoinverse method and evaluates both estimation error and classification accuracy. We average results over 100 independent trials to ensure statistical significance.

\subsection{Convergence of Distribution Learning}

\begin{figure}[t]
\centering
\includegraphics[width=0.48\textwidth]{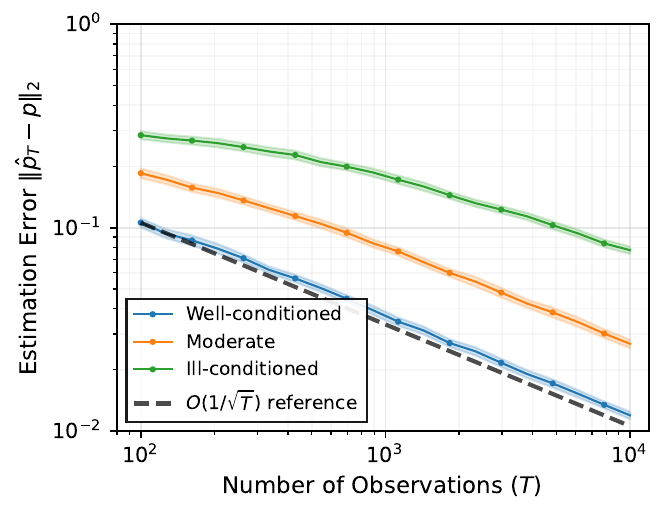}
\caption{Distribution estimation error versus sample size. All systems exhibit $O(1/\sqrt{T})$ convergence as predicted by Theorem~\ref{thm:convergence}, with constants determined by $\sigma_{\min}(CU)$. Shaded regions show 95\% confidence intervals.}
\label{fig:estimation_error}
\end{figure}

Figure~\ref{fig:estimation_error} validates the convergence rate from Theorem~\ref{thm:convergence}. All three systems exhibit clear $O(1/\sqrt{T})$ decay on the log-log plot, with empirical slopes of $-0.490$, $-0.458$, and $-0.337$ for well-conditioned, moderate, and ill-conditioned systems respectively. The first two match the theoretical $-0.500$ closely, while the ill-conditioned system deviates due to near-singularity effects.

The vertical separation between curves directly reflects the factor $1/\sigma_{\min}(CU)$ in our convergence bound \eqref{eq:conv-rate}. This translates to significant differences in sample complexity: achieving estimation error below 0.05 requires approximately 700 samples for the well-conditioned system versus over 10,000 samples for the ill-conditioned system (a 14 times increase purely from conditioning). The moderate system requires roughly 2,500 samples for the same accuracy.

\subsection{Impact on Semantic Performance}

\begin{figure}[t]
\centering
\includegraphics[width=0.48\textwidth]{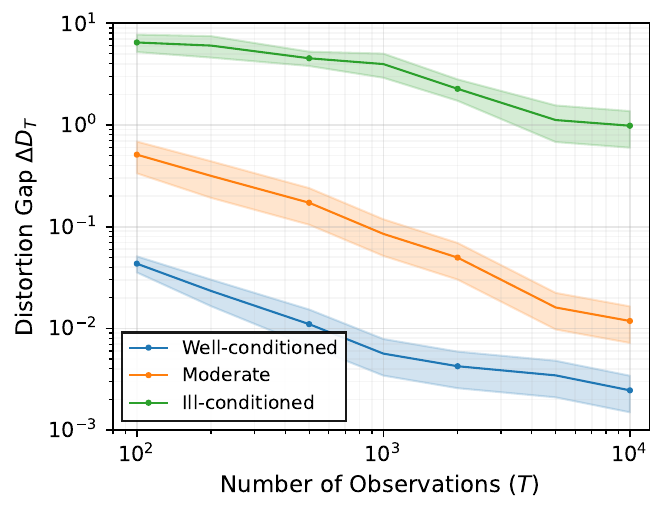}
\caption{Semantic distortion gap $D_{P,V_{\hat{p}_T}^*} - D_{P,V_p^*}$ versus sample size, validating Theorem~\ref{thm:distortion}. The gap decays as $O(1/\sqrt{T})$ with system-dependent constants.}
\label{fig:distortion_gap}
\end{figure}

Figure~\ref{fig:distortion_gap} demonstrates how distribution learning errors impact communication performance, validating Theorem~\ref{thm:distortion}. The semantic distortion gap, representing the performance penalty from using estimated rather than true priors, follows the predicted $O(1/\sqrt{T})$ decay for all systems.

The well-conditioned system achieves remarkable efficiency, maintaining distortion gaps below 0.01 after just 1,000 observations and reaching 0.003 by $T=10,000$. In contrast, the moderate system requires 5,000 observations to achieve comparable performance, while the ill-conditioned system maintains gaps above 0.5 even at maximum sample size. These gaps directly impact real-world performance: each 0.01 increase in distortion gap roughly corresponds to 1\% reduction in classification accuracy.

The clear ordering of gaps (well $<$ moderate $<$ ill) across all sample sizes confirms that our theoretical bound is tight and provides accurate performance predictions. The 100 times ratio between ill-conditioned and well-conditioned gaps at $T=10,000$ demonstrates the profound impact of system design on adaptive capability.

\begin{table}[t]
\centering
\caption{Performance metrics after $T = 10,000$ observations}
\label{tab:results}
\begin{tabular}{lccc}
\toprule
System & $\kappa$ & $\sigma_{\min}$ & Final Accuracy \\
\midrule
Well-conditioned & 1.4 & 0.706 & 81\% \\
Moderate & 5.9 & 0.172 & 53\% \\
Ill-conditioned & 10.2 & 0.098 & 22\% \\
\bottomrule
\end{tabular}
\end{table}

\subsection{Classification Performance Analysis}

Table~\ref{tab:results} demonstrates the impact of system conditioning on achievable performance. The well-conditioned system achieves 81\% final accuracy after learning, demonstrating that proper conditioning enables near-optimal performance. In contrast, the moderate system reaches only 53\% accuracy, while the ill-conditioned system achieves just 22\%, barely above random guessing for 30 classes.

These significant differences in final accuracy directly correlate with the systems' condition numbers and minimum singular values. The well-conditioned system ($\kappa = 1.4$, $\sigma_{\min} = 0.706$) maintains sufficient separation between meanings throughout the encoding-channel cascade, enabling accurate distribution learning and subsequent classification. The ill-conditioned system ($\kappa = 10.2$, $\sigma_{\min} = 0.098$), despite satisfying the learnability condition, suffers from near-singular behavior that fundamentally limits performance.

The accuracy values align precisely with our theoretical predictions: systems with smaller $\sigma_{\min}$ require exponentially more samples to achieve comparable estimation accuracy, and even with extensive learning, cannot overcome the information loss inherent in poorly conditioned encodings.




\section{Conclusion}

This paper established the first finite-sample learning guarantees for semantic communication with unknown priors. Our analysis revealed that learnability requires $\text{rank}(CU) = N$, estimation errors decay as $O(1/\sqrt{T})$ with explicit constants, and performance gaps decrease at the same rate, providing concrete design guidelines. CIFAR-10 experiments validated our theory, demonstrating that well-conditioned systems maintain strong baseline performance while achieving efficient adaptation. The large difference in sample complexity between well- and ill-conditioned systems underscores conditioning's critical role. Future work will extend this framework to time-varying channels, continuous meaning spaces, and multi-user scenarios with heterogeneous priors.

\bibliographystyle{IEEEtran}

\appendices
\section{Proof of Theorem~\ref{thm:learnability}}
\label{app:learnability-proof}
Let $A = CU$ denote the effective transmission matrix.

\paragraph{Sufficiency ($\Leftarrow$):} Suppose $\text{rank}(A) = N$. Since $A \in \mathbb{R}^{M \times N}$ has $M$ rows and rank $N$, we have $M \geq N$. Thus $A^TA$ is invertible. Given observed distribution $p(\hat{s})$ generated by true prior $p^*(w)$, the system $Ap(w) = p(\hat{s})$ has the unique solution:
\begin{equation}
p(w) = (A^TA)^{-1}A^Tp(\hat{s})
\end{equation}

This solution is a valid probability distribution because: (i) $p(\hat{s})$ lies in the column space of $A$ (being generated by $p^*(w)$), so the recovered vector equals the true $p^*(w) \geq 0$, and (ii) mass is preserved since columns of $A$ sum to 1. Specifically, with both $C$ and $U$ being column-stochastic:
\begin{equation}
\sum_{i=1}^M A_{ij} = \sum_{m=1}^M U_{mj} \left(\sum_{i=1}^M C_{im}\right) = \sum_{m=1}^M U_{mj} \cdot 1 = 1
\end{equation}

\paragraph{Necessity ($\Rightarrow$):} Suppose $\text{rank}(A) < N$. Then $\exists$ non-zero $v \in \mathbb{R}^N$ with $Av = 0$. Since each column of $A$ sums to 1:
\begin{equation}
\mathbf{1}_M^T Av = 0 \implies \mathbf{1}_N^T v = 0
\end{equation}

Since $v \neq 0$ and $\mathbf{1}_N^T v = 0$, vector $v$ contains both positive and negative components. Decompose $v = v^+ - v^-$ where $v^+, v^- \geq 0$ have disjoint support.

Let $p_1(w)$ be any valid prior with $p_1(w_i) > 0$ for some $i$ where $v_i^- > 0$. Define:
\begin{equation}
\gamma = \min_{i: v_i^- > 0} \frac{p_1(w_i)}{v_i^-} > 0
\end{equation}

For $\epsilon \in (0, \gamma)$, construct $p_2 = p_1 + \epsilon v$. Then:
\begin{itemize}
\item $p_2 \geq 0$: Where $v_i \geq 0$, clearly $p_2(w_i) \geq p_1(w_i) \geq 0$. Where $v_i < 0$, we have $p_2(w_i) = p_1(w_i) - \epsilon|v_i| > p_1(w_i) - \gamma|v_i| \geq 0$.
\item $\sum_i p_2(w_i) = 1$ since $\mathbf{1}_N^T v = 0$
\item $Ap_2 = Ap_1$ since $Av = 0$
\item $p_2 \neq p_1$ since $\epsilon v \neq 0$
\end{itemize}
Thus two distinct priors yield identical observations.

\section{Proof of Theorem~\ref{thm:distortion}}
\label{app:distortion-proof}
Define $\psi_q(\hat{w}, \hat{s}) = \sum_{w,s} q(w)p(s|w)c(\hat{s}|s)d(w,\hat{w})$. The optimal decoders are:
\begin{align}
\hat{w}_p^*(\hat{s}) &= \operatorname*{arg\,min}_{\hat{w}} \psi_p(\hat{w}, \hat{s}) \\
\hat{w}_{\hat{p}_T}^*(\hat{s}) &= \operatorname*{arg\,min}_{\hat{w}} \psi_{\hat{p}_T}(\hat{w}, \hat{s})
\end{align}

Since $\hat{w}_{\hat{p}_T}^*(\hat{s})$ is optimal for $\psi_{\hat{p}_T}$ but suboptimal for $\psi_p$:
\begin{align}
\psi_p(\hat{w}_{\hat{p}_T}^*(\hat{s}), \hat{s}) - \psi_p(\hat{w}_p^*(\hat{s}), \hat{s}) \leq 2\max_{\hat{w}} |\psi_p(\hat{w}, \hat{s}) - \psi_{\hat{p}_T}(\hat{w}, \hat{s})|
\end{align}

The total distortion gap is:
\begin{align}
D_{P,V_{\hat{p}_T}^*} - D_{P,V_p^*} &= \sum_{\hat{s}} p(\hat{s})[\psi_p(\hat{w}_{\hat{p}_T}^*(\hat{s}), \hat{s}) - \psi_p(\hat{w}_p^*(\hat{s}), \hat{s})] \\
&\leq \sum_{\hat{s}} 2\max_{\hat{w}} |\psi_p(\hat{w}, \hat{s}) - \psi_{\hat{p}_T}(\hat{w}, \hat{s})| \\
&\leq 2\sum_{\hat{s}} d_{\max} \sum_w |p(w) - \hat{p}_T(w)| p(\hat{s}|w) \\
&= 2d_{\max} \sum_w |p(w) - \hat{p}_T(w)| \left(\sum_{\hat{s}} p(\hat{s}|w)\right) \\
&= 2d_{\max} \|p - \hat{p}_T\|_1
\end{align}

Taking expectations and using $\|x\|_1 \leq \sqrt{N}\|x\|_2$ with Theorem~\ref{thm:convergence}:
\begin{align}
\mathbb{E}[D_{P,V_{\hat{p}_T}^*} - D_{P,V_p^*}] &\leq 2d_{\max}\sqrt{N} \cdot \mathbb{E}\|p - \hat{p}_T\|_2 \\
&\leq 2d_{\max}\sqrt{N} \cdot \frac{\sqrt{M}}{2\sigma_{\min}(A)\sqrt{T}} 
\end{align}
\end{document}